\newtheorem{theorem}{Theorem}
\theoremstyle{plain}
\theoremstyle{definition}
\theoremstyle{remark}
\providecommand{\examplename}{Example}
\providecommand{\remarkname}{Remark}
\providecommand{\theoremname}{Theorem}
\begin{document}
\title{Accuracy estimation of neural networks by extreme value theory}
\author{Gero Junike\thanks{Corresponding author. Department of Mathematics, Ludwig-Maximilians
Universität, Theresienstr. 39, 80333 München, Germany. E-mail: gero.junike@math.lmu.de}, Marco Oesting\thanks{Stuttgart Center for Simulation Science (SC SimTech) and Institute for Stochastics and Applications, University of Stuttgart,
70563 Stuttgart, Germany}}
\maketitle
\begin{abstract}
Neural networks are able to approximate any continuous function on
a compact set. However, it is not obvious how to quantify the error
of the neural network, i.e., the remaining bias between the function
and the neural network. Here, we propose the application of extreme
value theory to quantify large values of the error, which are typically
relevant in applications. The distribution of the error beyond some
threshold is approximately generalized Pareto distributed. We provide
a new estimator of the shape parameter of the Pareto distribution
suitable to describe the error of neural networks. 
Numerical experiments are provided.\\
\textbf{Keywords: }Neural networks, absolute error, extreme value
theory, Pickands-Balkema-de Haan Theorem, option pricing.
\end{abstract}

\section{Introduction}

By the classical universal approximation theorem, any continuous function
$f$ mapping a compact subset $C$ of the $d$-dimensional space to
the reals can be arbitrarily closely approximated by a neural network
$\varphi$ with sufficient number of neurons and suitable activation
functions. 

However, except in rare cases, the bias between a neural network
with finitely many neurons and the function $f$ is not known. Let
\begin{equation}
\mathcal{E}(\omega)=|f(\omega)-\varphi(\omega)|,\quad\omega\in C \label{eq:gobal_error}
\end{equation}
describe the absolute error of the neural network approximating the
function~$f$. Researchers have reported the absolute error, the mean
square error and sometimes the maximal error between the neural network
$\varphi$ and $f$ on a test set, i.e., a finite subset of $C$.
Unfortunately, $\mathcal{E}$ may take much larger values compared
to the maximal value on a finite test set. In sum, these quantities
provide little insight \emph{into what large values of $\mathcal{E}$ look
like.} 

Large values of $\mathcal{E}$ are of interest
in various applications, and we provide an application in finance below. In this
article, we propose to apply extreme value theory to quantify the
variable $\mathcal{E}$ statistically. Under some mild conditions,
the Pickands-Balkema-de Haan Theorem states that the conditional distribution
of $\mathcal{E}$ above a high threshold $u$ is approximated by a
generalized Pareto distribution with scale $\sigma(u)>0$ and shape $\gamma\in\mathbb{R}$.
That is, the statistical properties of $\mathcal{E}$ above a certain
threshold $u$ are well known provided $\sigma(u)$ and $\gamma$ can be reliably
estimated. From a theoretical point of view, we know that $\mathcal{E}$
is bounded, since $\mathcal{E}$ is a continuous function in $\omega$
and $C$ is compact. This implies that $\gamma \leq 0$. However, classical
approaches to estimating $\gamma$, such as maximum likelihood or moment-based approaches, often violate the constraint $\gamma<0$. In this article,
we propose a new way to estimate $\gamma$ such that $\gamma<0$ with
probability one. 

Extreme value theory allows us to estimate the distribution of $\mathcal{E}$ beyond the threshold $u$, i.e., $P(\mathcal{E}>x)$, for $x \geq u$, which can interpreted
as the probability of making an error greater than $x$. Further, this theory makes it possible to estimate the quantity $\mathbb{E}[\mathcal{E} - u \mid \mathcal{E}>u]$, i.e., the average size of the exceedance $\mathcal{E} - u$, given that the error $\mathcal{E}$ exceeds the threshold $u$. That is, extreme value theory allows us to quantify
statistically the error $\mathcal{E}$ beyond some threshold $u$.
We also briefly discuss how a bound of $P(\mathcal{E}>x)$ can be
estimated by Markov's inequality. 

We consider an application in finance wherein $\omega$
denotes some details of a financial contract such as the end date
of the contract etc., see Section \ref{sec:Numerical-experiments}
for details. The price of the contract is $f(\omega)$. However, $f(\omega)$
can often only be evaluated by slow Monte Carlo methods, see \cite{glasserman2004monte}.
Therefore, many researchers have proposed learning $f$ by a neural network
$\varphi$ offline and then using $\varphi$ as an approximation of
$f$ during high-frequency trading times, see e.g., \cite{de2018machine,liu2019pricing,liu2019neural}. Ruf and Wang, see \citep[p.~1]{ruf2019neural}, for example observe that ``more than one hundred papers
in the academic literature concern the use of artificial neural networks
(ANNs) for option pricing and hedging.'' The application of neural
networks is often motivated purely by gain in computational time.
The error $\mathcal{E}$ then corresponds to the amount of money by which we misprice
the contract, i.e., the amount of money we may lose by pricing the contract
using $\varphi$ instead of $f$. Financial institutions should be
very interested in quantifying this error. In a financial application,
$u$ may be somewhat less than a U.S.~cent, which is often the smallest
tradable quantity. $P(\mathcal{E}>x)$ can then be interpreted as
the probability of mispricing by more than $x$, and $\mathbb{E}[\mathcal{E}-u \mid \mathcal{E}>u]$
tells us how much we misprice the contract on average given that we exceed
the threshold $u$.

This article is structured as follows: In Section \ref{sec:Intro_setting},
we state the problem more formally. In Section \ref{sec:Extreme-value-theory},
we introduce extreme value theory and provide a new way of estimating
the shape parameter $\gamma$. We investigate an application in finance in Section \ref{sec:Numerical-experiments} after which Section \ref{sec:Conclusions}
concludes.

\section{\protect\label{sec:Intro_setting}Problem statement}

Let $\varphi:\mathbb{R}^{d}\to\mathbb{R}$ be a neural network approximating
some continuous function $f:\mathbb{R}^{d}\to\mathbb{R}$. Let $C_{\text{test}}\subset C_{\text{train}}\subset\mathbb{R}^{d}$ 
be two uncountable compact sets describing the training and test domains.
The training set consists of $M$ randomly chosen
samples in $C_{\text{train}}$ and the test set consists of $N$ randomly
chosen samples in $C_{\text{test}}$, drawn independently from the training sample. We provide a probabilistic view
of the test set: We interpret $C_{\text{test}}$ as a sample space,
use the Borel-$\sigma$-Algebra as event space and fix on it a probability
measure $P$. In the applications, $P$ is often the uniform
distribution on $C_{\text{test}}$, but we do not need this fact in
the remainder of the paper. We describe the absolute error by the
following random variable:
\begin{align*}
\mathcal{E}:C_{\text{test}} & \to\mathbb{R}\\
\omega & \mapsto|f(\omega)-\varphi(\omega)|.
\end{align*}
Let $\varepsilon_{1},...,\varepsilon_{N}\in[0,\infty)$ be $N$ independent
realizations of $\mathcal{E}$ observed from the test set. Usually,
the mean absolute error
\[
E_{1}=\frac{1}{N}\sum\nolimits_{i=1}^{N}\varepsilon_{i},
\]
the mean squared error
\[
E_{2}=\frac{1}{N}\sum\nolimits_{i=1}^{N}\varepsilon_{i}^{2}
\]
and the maximal error 
\[
E_{\infty}=\max_{i=1,\ldots,N}\varepsilon_{i}
\]
are reported. Here, it is important to note that, with a positive
probability, the error $\mathcal{E}$ at a randomly chosen additional
test point might be even larger than $E_{\infty}$. Perceiving $\varepsilon_{1},\ldots,\varepsilon_{N}$
as realizations of independent copies $\mathcal{E}_{1},\ldots,\mathcal{E}_{N}$
of $\mathcal{E}$ with unknown continuous distribution function $F$,
it is easy to see that
\begin{equation}
P\left(\mathcal{E}>\max_{i=1,\ldots,N}\mathcal{E}_{i}\right)=\frac{1}{N+1}.\label{eq:E>max}
\end{equation}
Thus, even in the case of a large number of samples from the test set, there is still a non-negligible
probability of $\mathcal{E}$ exceeding $E_{\infty}$. This observation
motivates some further analysis of the distribution of $\mathcal{E}$
close to and \emph{beyond} $E_{\infty}$. In particular, we are interested
in estimating the exceedance probability
\begin{equation}
P(\mathcal{E}>x),\quad x\geq u\label{eq:PropMax}
\end{equation}
and the mean excess 
\begin{equation}
\mathbb{E}[\mathcal{E}-u\mid \mathcal{E}>u]\label{eq:ConExp}
\end{equation}
for some ``large'' threshold $u$, i.e., $u$ close to the upper
endpoint 
\[
x^{*}=\sup\{x:P(\mathcal{E}\leq x)<1\}.
\]
The exceedance probability describes the probability that the error
of the neural networks is greater than some $x \geq u$. The mean excess is helpful in quantifying the expected excess given that the error is already greater
than $u$. 

In many applications, the neural network is defined on a compact set
and, being continuous, the error $\mathcal{E}$ is therefore bounded,
which means that $x^{*}<\infty$. Extreme value theory then allows
us to estimate $x^{*}$ and describe the behavior of the distribution
$F$ close to $x^{*}$. Applying extreme value theory, we find sharp
estimates for these two quantities (\ref{eq:PropMax}) and (\ref{eq:ConExp}). 

We also compare our results based on extreme value theory to the simpler
approach of estimating (\ref{eq:PropMax}) by Markov's Inequality, which, for $m=2$, is closely related to Chebyshev's inequality and which states: 
\begin{equation}
P(\mathcal{E}>x)\leq\frac{\mathbb{E}[|\mathcal{E}|^{m}]}{x^{m}}\approx\frac{\frac{1}{N}\sum_{i=1}^N \varepsilon_{i}^{m}}{x^{m}},\quad m\geq0\quad x>0.\label{eq:Markov}
\end{equation}

\section{\protect\label{sec:Extreme-value-theory}Extreme value theory}

Assume that $\mathcal{E}_{1},\ldots,\mathcal{E}_{N}$ are independent,
with distribution function $F$. %
By the famous Pickands-Balkema-de Haan Theorem, and under mild assumptions
on $F$, the distribution of the exceedances above a threshold $u$
can be approximated by a generalized Pareto distribution
\[
H_{\gamma,\sigma}(x)=1-\left(\max\left\{1+\gamma\frac{x}{\sigma}, 0\right\}\right)^{-1/\gamma},\quad x>0,
\]
with scale parameter $\sigma>0$ and shape parameter $\gamma\in\mathbb{R}$, i.e., 
\begin{equation} \label{eq:gpd-approx}
P(\mathcal{E}-u\leq x\mid\mathcal{E}>u)\approx H_{\gamma,\sigma(u)}(x),\quad0\leq x<x^{\ast}-u,
\end{equation}
for some nonnegative function $\sigma$ of $u$ as $u\uparrow x^{*}$,
see, for instance, \cite{EKM97} for details.

As argued in Section \ref{sec:Intro_setting}, the distribution
of the absolute error in neural networks typically possesses a finite
upper end point. This implies that the shape parameter $\gamma$ is
negative (or zero) and the scaling function $\sigma$ in the Pickands--Balkema--de
Haan Theorem is of the form 
\[
\sigma(u)=-\gamma(x^{*}-u),\quad u<x^{*},
\]
yielding the approximation 
\begin{equation}
P(\mathcal{E}>x\mid\mathcal{E}>u)\approx\left(1-\frac{x-u}{x^{*}-u}\right)^{-1/\gamma},\quad u \leq x < x^{*},\label{eq:conProp}
\end{equation}
which can then be used to assess (\ref{eq:PropMax}) and (\ref{eq:ConExp})
provided that we have reliable estimates for $x^{*}$ and $\gamma$.

Here we follow \cite{FANR-17}, who constructed an estimator for
$x^{\ast}$ based on order statistics. More precisely, let $\varepsilon_{(1)}<\varepsilon_{(2)}<\ldots<\varepsilon_{(N)}$
be the sorted realizations of the absolute errors $\mathcal{E}_{1},\ldots,\mathcal{E}_{N}$.
Then, 
\[
\widehat{x^{*}}_{k,N}:=\varepsilon_{(N)}+\varepsilon_{(N-k)}-\frac{1}{\log(2)}\sum_{i=0}^{k-1}\log\big(1+\frac{1}{k+i}\big)\varepsilon_{(N-k-i)}
\]
is an estimator for $x^{*}$, which is consistent if $k(N)\to\infty$
and $k(N)/N\to0$, as $N\to\infty$, see \cite{FANR-17}.


As classical estimation techniques for $\gamma$ based on generalized
extreme value distributions or generalized Pareto distributions often
fail to meet the constraint $\gamma<0$, we derive a new estimator
for $\gamma$ based on a maximum-likelihood estimator in the following
theorem: 
\begin{theorem}\label{Thm1}
Let $\gamma <0$ be the extreme value index of $\mathcal{E}$ and $x^*$ be the corresponding upper end point. Furthermore, let
\begin{equation}
\widetilde{\gamma}_{k,N}:=\frac{1}{k}\sum_{j=0}^{k-1}\log\left(1-\frac{\varepsilon_{(N-j)}-\varepsilon_{(N-k)}}{x^{*}-\varepsilon_{(N-k)}}\right).\label{eq:Thm}
\end{equation}
Then $\widetilde{\gamma}_{k,N}$ is negative with probability one
and converges to $\gamma$ in probability for $N\to\infty$.
\end{theorem}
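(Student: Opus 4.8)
The plan is to treat the two assertions separately, since they rest on quite different facts. For the negativity claim, I would first simplify the summands by observing that, because $F$ is continuous, no observation hits the endpoint and the order statistics are almost surely strictly ordered; hence $\varepsilon_{(N-k)}<x^{*}$ almost surely and
\[
1-\frac{\varepsilon_{(N-j)}-\varepsilon_{(N-k)}}{x^{*}-\varepsilon_{(N-k)}}=\frac{x^{*}-\varepsilon_{(N-j)}}{x^{*}-\varepsilon_{(N-k)}},\qquad j=0,\dots,k-1.
\]
For every $j\in\{0,\dots,k-1\}$ one has $N-j>N-k$, so $\varepsilon_{(N-k)}<\varepsilon_{(N-j)}<x^{*}$ almost surely, which forces each ratio strictly into $(0,1)$ and each logarithm to be strictly negative. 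Averaging $k$ strictly negative terms then gives $\widetilde{\gamma}_{k,N}<0$ almost surely. This part is essentially bookkeeping once the rewriting above is in place.

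For the consistency claim, the same rewriting is the key step, because it exhibits the estimator as a Hill estimator in disguise. Setting $Y_{i}=x^{*}-\mathcal{E}_{i}$ and $W_{i}=1/Y_{i}$, the identity above yields
\[
\widetilde{\gamma}_{k,N}=\frac{1}{k}\sum_{i=1}^{k}\log\frac{Y_{(i)}}{Y_{(k+1)}}=-\frac{1}{k}\sum_{i=1}^{k}\log\frac{W_{(N-i+1)}}{W_{(N-k)}},
\]
where $Y_{(1)}<\dots<Y_{(k+1)}$ are the smallest order statistics of the $Y_{i}$ and, equivalently, $W_{(N-i+1)}=1/Y_{(i)}$ and $W_{(N-k)}=1/Y_{(k+1)}$ are the largest order statistics of the $W_{i}$. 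The right-hand side is exactly the negative of the Hill estimator computed from the top $k$ upper order statistics of the sample $W_{1},\dots,W_{N}$.

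I would then transfer the tail assumption across the reciprocal transformation. Since $\mathcal{E}$ has extreme value index $\gamma<0$ with finite endpoint $x^{*}$, its survival function satisfies $1-F(x^{*}-1/w)\in RV_{1/\gamma}$ (regularly varying with index $1/\gamma$) as $w\to\infty$; as $P(W>w)=1-F(x^{*}-1/w)$, the variable $W$ has a regularly varying upper tail with index $\alpha=-1/\gamma>0$, i.e.\ $W$ lies in the max-domain of attraction of the Fréchet law $\Phi_{\alpha}$ and has extreme value index $1/\alpha=-\gamma$. Invoking the classical weak consistency of the Hill estimator (for instance through the R\'enyi/exponential-spacings representation of the upper order statistics, valid whenever $k=k(N)\to\infty$ and $k(N)/N\to0$), the Hill estimator of $W$ converges in probability to $-\gamma$, whence $\widetilde{\gamma}_{k,N}\to\gamma$ in probability.

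The main obstacle I anticipate is not the probabilistic convergence itself, which is borrowed wholesale from Hill-estimator theory, but the careful verification of the domain-of-attraction transfer together with the regularity conditions on the intermediate sequence $k(N)$ under which that theory applies (and which the statement leaves implicit, just as for the endpoint estimator $\widehat{x^{*}}_{k,N}$). In particular, one must check that the reciprocal map sends the Weibull-type condition on $\mathcal{E}$ exactly onto the Fréchet-type condition on $W$ with reciprocal index, and confirm that the normalizing order statistic $W_{(N-k)}=1/Y_{(k+1)}$ is precisely the one used in the standard Hill estimator; both are routine but must be stated precisely to make the reduction airtight.
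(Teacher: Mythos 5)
Your proof is correct and follows essentially the same route as the paper's: both rewrite the summands as $\log\bigl((x^{*}-\varepsilon_{(N-j)})/(x^{*}-\varepsilon_{(N-k)})\bigr)$, identify $\widetilde{\gamma}_{k,N}$ as the negative Hill estimator of $W=1/(x^{*}-\mathcal{E})$, transfer the Weibull-type condition on $\mathcal{E}$ to the Fr\'echet-type condition on $W$ (the paper cites Theorem 3.3.12 of Embrechts--Kl\"uppelberg--Mikosch where you verify the regular-variation statement directly), and then invoke weak consistency of the Hill estimator under $k(N)\to\infty$, $k(N)/N\to 0$. The only content in the paper's appendix that you omit is the approximate-likelihood derivation motivating the form of $\widetilde{\gamma}_{k,N}$, which is not needed to prove the stated claims.
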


\begin{proof}
The proof can be found in the appendix.
\end{proof}
In practical applications, we propose to replace $x^{*}$ in Eq.~(\ref{eq:Thm})
by $\widehat{x^{*}}_{k,N}$. We denote the resulting estimator for $\gamma$ by $\widehat{\gamma}_{k,N}$.

In the remainder of this section, let $u:=\varepsilon_{(N-k)}$ for a suitable
$k$. In practice, one often chooses the value $k$ such that the
empirical exceedance probability $k/N$ provides a reliable estimate
for $P(\mathcal{E}>u)$, e.g., $k/N\approx0.01$ is often a reasonable choice.
Then, plugging the above estimators for $\gamma$ and $x^\ast$ into (\ref{eq:conProp}), we
obtain the approximations

\begin{align}
P(\mathcal{E}>x)= & P(\mathcal{E}>u)\cdot P(\mathcal{E}>x\mid\mathcal{E}>u)\nonumber \\
\approx & \frac{k}{N}\left(1-\frac{x-u}{\widehat{x^{*}}_{k,N}-u}\right)^{-1/\widehat{\gamma}_{k,N}},\quad u \leq x < \widehat{x^{*}}_{k,N},\label{eq:Pgx}
\end{align}
and
\begin{align}
\mathbb{E}[\mathcal{E}- u \mid \mathcal{E}>u] & =\int_{0}^{\infty}P(\mathcal{E}>y\mid\mathcal{E}>u)\,\mathrm{d}y-u\nonumber \\
 & \approx \int_{u}^{\widehat{x^{*}}_{k,N}}\left(1-\frac{y-u}{\widehat{x^{*}}_{k,N}-u}\right)^{-1/\widehat{\gamma}_{k,N}}\,\mathrm{d}y\nonumber \\
 & = \frac{\widehat{x^{*}}_{k,N}-u}{1-\frac{1}{\widehat{\gamma}_{k,N}}},\label{eq:EMinusUgU}
\end{align}
for (\ref{eq:PropMax}) and (\ref{eq:ConExp}), respectively.

\section{\protect\label{sec:Numerical-experiments}Numerical experiments}

In Section \ref{subsec:Markov-Inequality}, we apply Markov's inequality
to estimate (\ref{eq:PropMax}). From a theoretical point of view, Markov's inequality is much simpler than the application of extreme value
theory. However, Markov's inequality only provides an upper bound
for (\ref{eq:PropMax}) and is usually not very sharp. We will see
in Section \ref{subsec:Extreme-value-theory} that extreme value theory
is much better suited to estimate exceedance probabilities. Extreme
value theory also enables us to estimating (\ref{eq:ConExp}), which
cannot be archived by Markov's inequality.

\subsection{\protect\label{subsec:Markov-Inequality}Markov's Inequality}

\cite[Sec.~4.4.1]{liu2019pricing} use neural networks to price rapidly financial
contracts, which are usually priced with (computationally slow) Monte Carlo or Fourier
pricing techniques. They
employ an advanced model widely used in industry, the Heston model (see \cite{heston1993closed}), and obtain the following errors on a test set: $E_{1}=9.51\times10^{-5}$
and $E_{2}=1.65\times10^{-8}$. A maximal error is not reported. The
variable $\mathcal{E}$ has a financial interpretation: it describes
the absolute difference between the true price of the contract and
the approximation by the neural network, i.e., the financial mispricing
if the contract is priced by the neural network. Prices are typically rounded to the nearest whole unit in U.S.~cents. Therefore, let us assume that
$\mathcal{E}$ should be less than one-third of one U.S.~cent. Applying Markov's
inequality with $m=2$, we conclude that the probability that $\mathcal{E}$ is
greater than $0.0033$ is less than $0.15\%$. Put differently: with
(only) a probability of $99.85\%$, we can be sure that we make
a pricing error of less than one-third of one U.S.~cent. This probability might be
too far away from one in practical applications, since typically millions of such contracts are traded. Mispricing about $0.15\%$ of the contracts might result in a great loss. One way to solve this challenge would be to improve the neural network by using a larger training set. One would then decrease the mean square
error $E_{2}$ and yield better bounds applying Markov's inequality with $m=2$.
Alternatively, one could apply extreme value theory to estimate (\ref{eq:PropMax})
more precisely. 


\subsection{\protect\label{subsec:Extreme-value-theory}Extreme value theory}

As in \cite[Sec. 3.2.2]{de2018machine}, we use machine learning techniques
to price financial contracts called \emph{American put options}. We
first describe how the training and the test sets are constructed
and then apply extreme value theory to estimate (\ref{eq:PropMax})
and (\ref{eq:ConExp}). 

The price in U.S. Dollars of the American put option can be described
by a function $f(\omega)$, where 
\[
\omega=(K,T,r,q,\sigma)\in\mathbb{R}^{5}
\]
is specified in the contract and has the following interpretation
(see \cite{de2018machine} for details): The contract gives the holder
the right to sell a stock (which is fixed in the contract) anytime
before the maturity $T$ (in months) for a fixed price $K$ (in $\%$
of the stock price at the beginning of the contract). It is assumed
that the stock pays a dividend yield $q$ and can be described by
a binominal tree with volatility parameter $\sigma>0$. In order to be able to discount future cash-flows of the contract, it is assumed that there is a
risk-free bank account paying interest rates $r$. As in \cite[Sec. 3.2.2]{de2018machine},
we define the following training and test domains. Let
\[
C_{\text{train}}=[40\%,160\%]\times[11m,12m]\times[1.5\%,2.5\%]\times[0\%,5\%]\times[0.05,0.55]
\]
and
\[
C_{\text{test}}=[50\%,150\%]\times[11m,12m]\times[1.5\%,2.5\%]\times[0\%,5\%]\times[0.1,0.5].
\]
One can observe that $C_{\text{test}}$ is slightly smaller than
$C_{\text{train}}$. The reason is that many machine learning techniques
do not perform very well close to the boundary of the training domain. We uniformly
sample $10^{5}$ times from $C_{\text{train}}$ and price the contract
for each sample using a (slow) binominal tree. Using Gaussian regression for pricing an American put option is up to 137 times faster
than pricing using a binominal tree, see \cite{de2018machine}, which could confer a significant
advantage in high-frequency trading. 

We train a neural network $\varphi$ with three hidden layers consisting
of $300$ neurons each using the Adam optimizer. We use $20$ epochs,
the batch size is $100$ and we use $20\%$ of the data for validation.

Similarly, we generate $100$ independent test sets, each of size
$N=10^{5}$, by sampling uniformly from $C_{\text{test}}$. On each
test set, we apply extreme value theory to estimate the quantities
(\ref{eq:PropMax}) and (\ref{eq:ConExp}), as explained in Section \ref{sec:Extreme-value-theory}. 

Since prices are typically rounded to the nearest whole unit in U.S.
cents, we set $k=270$, which corresponds to a threshold of about $u=0.33$ of one U.S.~cent, throughout all test sets.
We estimate the probability of exceedance
(on a single test set) and obtain $P(\mathcal{E}>u)=0.26\%\pm0.03$,
which is very close to the true probability given by $0.25\%$. If
we are unlucky and we make a pricing error greater than the threshold
$u$, we estimate the mean excess using extreme value theory by Equation~(\ref{eq:EMinusUgU}) by $\mathbb{E}[\mathcal{E}-u\mid\mathcal{E}>u]=0.03\pm0.003$
U.S.~cents, which is almost identical to the empirical mean excess
estimated from all test sets together. In conclusion, the probability
of mispricing the contract by more than $0.33$ U.S.~cents is small
$(0.26\%)$, and if we misprice the contract by more than $0.33$ U.S.
cents, on average, we misprice it by $0.36$ U.S.~cents.

In Figure \ref{fig:1}, we see for different levels $x$  the probability of exceedance, i.e., $P(\mathcal{E}>x)$, estimated by extreme value theory by Equation~(\ref{eq:Pgx}) on average over all test sets including confidence intervals. Confidence intervals are obtained by adding and subtracting twice the standard deviation of the probability of exceedance. We compare the estimated
probabilities to the ``true'' probabilities, which are empirically
estimated using all 100 test sets together. We observe that extreme
value theory estimates the true exceedance probabilities precisely
for a wide range of values for $x$, i.e., for levels of $x$ between the
0.25\% quantile and the 0.01\% quantile. Beyond a certain point — above the 0.01\% quantile — the exceedance probability cannot be estimated reliably. This is because the estimation of $x^\ast$ is subject to uncertainties.

Markov bounds are also included in Figure \ref{fig:1}. These bounds
overestimate the true exceedance probabilities by a large margin, and we observe that Markov's inequality becomes sharper when four instead of two moments are used.

\begin{figure}
\begin{centering}
\includegraphics[scale=0.6]{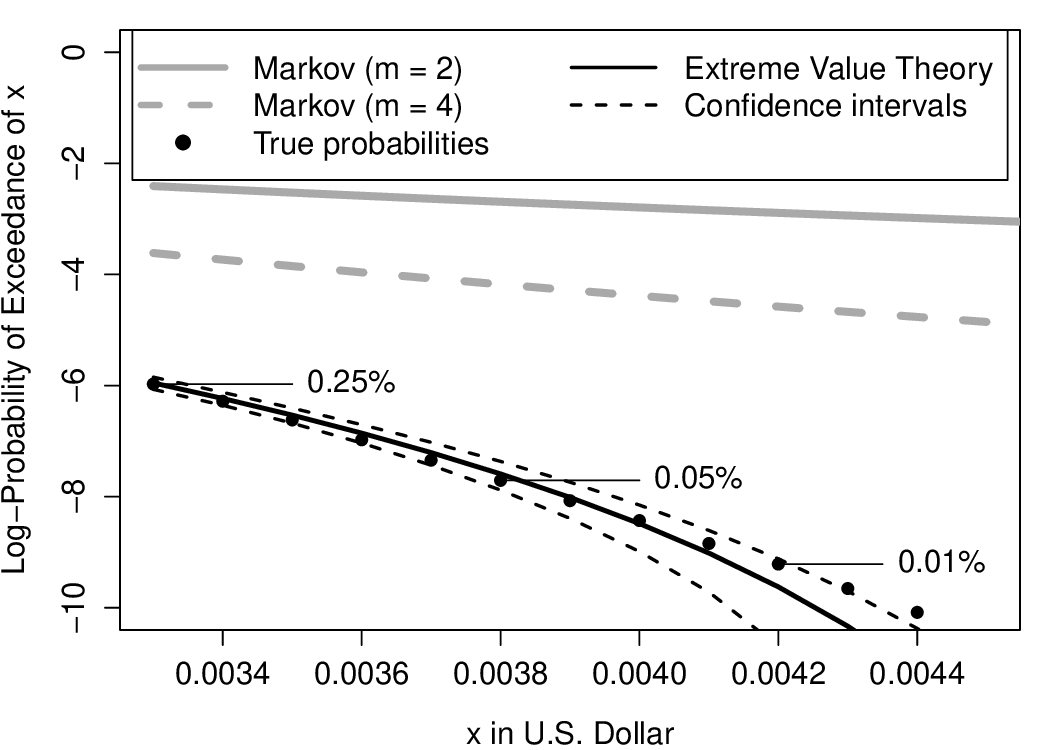}
\par
\end{centering}

\caption{\label{fig:1}Estimation of the probability of exceedance, $P(\mathcal{E}>x)$,
by extreme value theory and Markov's inequality. We use the threshold $u=0.33$ U.S.~cents by setting $k=270$.}
\end{figure}

\section{\protect\label{sec:Conclusions}Conclusions}

We analyze the error $\mathcal{E}$ beyond a certain threshold $u$
approximating a function $f$ by a neural network by extreme value
theory. The probability of exceedance and
the mean excess can be reliably estimated from a small test set for
a wide range of levels of $x$. In applications, large values of $\mathcal{E}$
are more critical. The probability of exceedance and the mean excess
help to quantify large values of $\mathcal{E}$ statistically. This analysis
has possible applications for risk management in financial institutions.

\bibliographystyle{plain}
\bibliography{biblio}

\appendix

\section{Proof of Theorem \ref{Thm1}} 
We choose the $k$th upper order statistic $\varepsilon_{(N-k)}$
as threshold $u$ in Equation \eqref{eq:gpd-approx}. Thus, we obtain the approximate likelihood 
\begin{align*}
L= & \prod_{j=0}^{k-1}\frac{\mathrm{d}}{\mathrm{d}x}\left[1-\left(1-\frac{x-\varepsilon_{(N-k)}}{x^{\ast}-\varepsilon_{(N-k)}}\right)^{-1/\gamma}\right]\Bigg|_{x=\varepsilon_{(N-j)}}\\
= & \prod_{j=0}^{k-1}-\frac{1}{\gamma\big(x^{\ast}-\varepsilon_{(N-k)}\big)}\left(1-\frac{\varepsilon_{(N-j)}-\varepsilon_{(N-k)}}{x^{\ast}-\varepsilon_{(N-k)}}\right)^{-1/\gamma-1}.
\end{align*}
Setting the derivative of the log-likelihood to zero, we obtain the
following maximum likelihood estimator for $\gamma$: 
\begin{equation*}
\widetilde{\gamma}_{k,N}=\frac{1}{k}\sum_{j=0}^{k-1}\log\left(1-\frac{\varepsilon_{(N-j)}-\varepsilon_{(N-k)}}{x^{\ast}-\varepsilon_{(N-k)}}\right)=-\frac{1}{k}\sum_{j=0}^{k-1}\log\left(\frac{(x^{*}-\varepsilon_{(N-j)})^{-1}}{(x^{*}-\varepsilon_{(N-k)})^{-1}}\right).
\end{equation*}
The estimator $\widetilde{\gamma}_{k,N}$ is less than zero with probability
one. It is well-known from univariate extreme value theory that $\mathcal{E}$
is in the max-domain of attraction of a Weibull distribution with
shape parameter $-1/\gamma$ if and only if $1/(x^{*}-\mathcal{E})$
is in the max-domain of attraction of a Fréchet distribution with
parameter $-1/\gamma$, see, for instance, Theorem 3.3.12 in \cite{EKM97},
i.e., $\widetilde{\gamma}_{k,N}$ is the negative Hill estimator for
the random variable $1/(x^{*}-\mathcal{E})$. Consequently, 
\[
\widetilde{\gamma}_{k,N}\to_{p}\gamma
\]
as $N\to\infty$, see, for instance, Theorem 4.2 in \cite{resnick07}. 
\end{document}